\newcommand{\todo}[1]{}
\renewcommand{\todo}[1]{{\color{red} TODO: {#1}}}
\renewcommand{\vec}[1]{\mathbf{#1}}
\newcommand{\examples}[1]{\mathcal{S}_{#1}}
\newcommand{\query}[1]{\mathcal{Q}_{#1}}
\newcommand{\querysize}{q}
\newcommand{\featuresize}{{D_{\vec{z}}}}
\newcommand{\imagesize}{{D_{\vec{x}}}}
\newcommand{\phisize}{{D_{\phi}}}
\newcommand{\metric}{d}
\DeclareMathOperator{\E}{\mathbb{E}}
\DeclareMathOperator{\softmax}{softmax}
\DeclarePairedDelimiter\floor{\lfloor}{\rfloor}
\newtheorem{lemma}{Lemma}
\title{TADAM: Task dependent adaptive metric for improved few-shot learning}
\author{
    Boris N. Oreshkin \\
    Element AI \\
    \texttt{boris@elementai.com} \\
    \And
    Pau Rodriguez \\
    %\thanks{Computer Vision Center, UAB} \\
    Element AI, CVC-UAB \\
    \texttt{pau.rodriguez@elementai.com}
    \And
    Alexandre Lacoste \\
    Element AI \\
    \texttt{allac@elementai.com} \\
}
\begin{document}

\maketitle

\begin{abstract}
Few-shot learning has become essential for producing models that generalize from few examples. In this work, we identify that metric scaling and metric task conditioning are important to improve the performance of few-shot algorithms. Our analysis reveals that simple metric scaling completely changes the nature of few-shot algorithm parameter updates. Metric scaling provides improvements up to 14\% in accuracy for certain metrics on the mini-Imagenet 5-way 5-shot classification task. We further propose a simple and effective way of conditioning a learner on the task sample set, resulting in learning a task-dependent metric space. Moreover, we propose and empirically test a practical end-to-end optimization procedure based on auxiliary task co-training to learn a task-dependent metric space. The resulting few-shot learning model based on the task-dependent scaled metric achieves state of the art on mini-Imagenet. We confirm these results on another few-shot dataset that we introduce in this paper based on CIFAR100.  Our code is publicly available at~\url{https://github.com/ElementAI/TADAM}.
\end{abstract}

\section{Introduction}
Humans can learn to identify new categories from few examples, even from a single one \citep{carey1978acquiring}. Few-shot learning has recently attracted significant attention  \citep{vinyals2016matching, snell2017prototypical,sung2018learning,Santoro16metalearning,munkhdalai2018rapid,mishra2018simle}, as it aims to produce models that can generalize from small amounts of labeled data. In the few-shot setting, one aims to learn a model that extracts information from a set of support examples (\emph{sample} set) to predict the labels of instances from a \emph{query} set. Recently, this problem has been reframed into the meta-learning framework \cite{ravi2016optimization}, \emph{i.e.} the model is trained so that given a \emph{sample} set or task, produces a classifier for that specific task. Thus, the model is exposed to different tasks (or episodes) during the training phase, and it is evaluated on a non-overlapping set of new tasks \citep{vinyals2016matching}.

Two recent approaches have attracted significant attention in the few-shot learning domain: \emph{Matching Networks} \citep{vinyals2016matching}, and \emph{Prototypical Networks} \citep{snell2017prototypical}. In both approaches, the \emph{sample} set and the \emph{query} set are embedded with a neural network, and nearest neighbor classification is used given a metric in the embedded space. Since then, the problem of learning the most suitable metric for few-shot learning has been of interest to the field~\citep{vinyals2016matching, snell2017prototypical,sung2018learning,munkhdalai2018rapid,mishra2018simle}. Learning a metric space in the context of few-shot learning generally implies identifying a suitable similarity measure (e.g. cosine or Euclidean), a feature extractor mapping raw inputs onto similarity space (\emph{e.g.} convolutional stack for images or LSTM stack for text), a cost function to drive the parameter updates, and a training scheme (often episodic). Although the individual components in this list have been explored, the relationships between them have not received considerable attention.   

In the current work we aim to close this gap. We show that taking into account the interaction between the identified components leads to significant improvements in the few-shot generalization. In particular, we show that a non-trivial interaction between the similarity metric and the cost function can be exploited to improve the performance of a given similarity metric via scaling. Using this mechanism we close more than the 10\% gap in performance between the cosine similarity and the Euclidean distance reported in~\citep{snell2017prototypical}. Even more importantly, we extend the very notion of the metric space by making it task dependent via conditioning the feature extractor on the specific task. However, learning such a space is in general more challenging than learning a static one. Hence, we find a solution in exploiting the interaction between the conditioned feature extractor and the training procedure based on auxiliary co-training on a simpler task. Our proposed few-shot learning architecture based on task-dependent scaled metric achieves superior performance on two challenging few-shot image classification datasets. It shows up to 8.5\% absolute accuracy improvement over the baseline (\citet{snell2017prototypical}), and 4.8\% over the state-of-the-art~\citep{munkhdalai2018rapid} on the 5-shot, 5-way mini-Imagenet classification task, reaching 76.7\% of accuracy, which is the best-reported accuracy on this dataset.

\subsection{Background} \label{ssec:definitions}

We consider the episodic $M$-shot, $K$-way classification scenario. In this scenario, a learning algorithm is provided with a \emph{sample} set $\examples{} = \{(\vec{x}_i, y_i) \}_{i=1}^{MK}$ consisting of $M$ examples for each of $K$ classes and a \emph{query} set $\query{} = \{(\vec{x}_i, y_i) \}_{i=1}^{\querysize}$ for a task to be solved within a given episode. The \emph{sample} set provides the task information via observations $\vec{x}_i \in \mathbb{R}^\imagesize$ and their respective class labels $y_i \in \{1, \ldots, K\}$. Given the information in the \emph{sample} set $\examples{}$, the learning algorithm is able to classify individual samples from the \emph{query} set $\query{}$. Next, we define a similarity measure $d : \mathbb{R}^{\featuresize \times \featuresize} \rightarrow \mathbb{R}$. Note that $d$ does not have to satisfy the classical metric properties (non-negativity, symmetry, subadditivity) to be useful in the context of few-shot learning. The dimensionality of metric input, $\featuresize$, will most naturally be related to the size of embedding created by a (deep) feature extractor $f_{\phi} : \mathbb{R}^{\imagesize} \rightarrow \mathbb{R}^{\featuresize}$, parameterized by $\phi$, mapping $\vec{x}$ to $\vec{z}$. Here $\phi \in \mathbb{R}^{\phisize}$ is a list of parameters defining $f_{\phi}$, \emph{e.g.} a list of weights in a neural network. The set of representations $(f_{\phi}(\vec{x}_i), y_i), \forall (\vec{x}_i, y_i) \in \examples{}$ can directly be used to solve the few-shot learning classification problem by association. For example, Matching networks~\citep{vinyals2016matching} use sample-wise attention mechanism to perform kernel label regression. Instead, \citet{snell2017prototypical} defined a feature representation $\vec{c}_k$ for each class $k$ as the mean over embeddings belonging to $\examples{k}$: $\vec{c}_k = \frac{1}{K} \sum_{\vec{x}_i \in \examples{k}} f_{\phi}(\vec{x}_i)$. To learn $\phi$, they minimize $-\log p_\phi (y=k | \vec{x})$ using the softmax over prototypes $\mathbf{c}_k$ to define the likelihood: $p_\phi (y=k | \vec{x}) = \softmax(-d(f_{\phi}(\vec{x}), \mathbf{c}_k))$.

\subsection{Summary of contributions} \label{ssec:summary_of_contributions}

\textbf{Metric Scaling:} To our knowledge, this is the first study to (i) propose metric scaling to improve performance of few-shot algorithms, (ii) mathematically analyze its effects on objective function updates and (iii) empirically demonstrate its positive effects on few-shot performance.

\textbf{Task Conditioning:} We use a task encoding network to extract a task representation based on the task's \emph{sample} set. This is used to influence the behavior of the feature extractor through FILM \citep{perez2017film}.

\textbf{Auxiliary task co-training:} We show that co-training the feature extraction on a conventional supervised classification task reduces training complexity and provides better generalization.

\subsection{Related work} \label{ssec:related_work}
Three main approaches for solving the few-shot classification problem can be identified in the literature. The first one, which is used in this work, is the meta-learning approach, \emph{i.e.} learning a model that, given a task (set of labeled data), produces a classifier that generalizes across all tasks \citep{thrun1998lifelong, schmidhuber1997shifting}. This is the case of Matching Networks \citep{vinyals2016matching}, which optionally use a Recurrent Neural Network (RNN) to accumulate information about a given task. In MAML \citep{finn2017model}, the parameters of an arbitrary learner model are optimized so that they can be quickly adapted to a particular task. In ``Optimization as a model'' \citep{ravi2016optimization}, a learner model is adapted to a new episodic task by a recurrent meta-learner producing efficient parameter updates. A more general approach was proposed by~\citet{Santoro16metalearning}, where the meta-learner is trained to represent entries from a \emph{sample} set in an external memory. Similarly, adaResNet \citep{munkhdalai2018rapid} uses memory and the \emph{sample} set to produce shift coefficients on the neuron activations of the \emph{query} set classifier.
Many recent approaches focus on learning a metric on the episodic feature space. Prototypical networks \citep{snell2017prototypical} use a feed-forward neural network to embed the task examples and perform nearest neighbor classification with the class centroids. The relation network approach by~\citet{sung2018learning} introduces a separate learnable similarity metric. SNAIL \citep{mishra2018simle} uses an explicit attention mechanism applicable both to supervised and to the sequence based reinforcement learning tasks. It has also been shown that these approaches benefit from leveraging unlabeled and simulated data~\citep{ren2018meta, yuxiongwang2017imaginary}.

A second approach aims to maximize the distance between examples from different classes \citep{koch2015siamese}. Similarly, in \citep{hadsell2006dimensionality}, a contrastive loss function is used to learn to project data onto a manifold that is invariant to deformations in the input space. In the same vein, in \citep{fink2005object, schroff2015facenet, taigman2015web}, triplet loss is used for learning a representation for few-shot learning. The attentive recurrent comparators \citep{shyam2017attentive} go beyond classical siamese approaches and use a recurrent architecture to learn to perform pairwise comparisons and predict if the compared examples belong to the same class.

The third approach relies on Bayesian modeling of the prior distribution of the different categories like in \citet{fei2006one, Bauer2017discriminative}, or \citet{lake2013one, edwards2016towards, Lacoste2018deepprior} who rely on hierarchical Bayesian modeling.

As for task conditioning, \cite{Dumoulin2017learned, Perez2017LearningVR, perez2017film} proposed conditional batch normalization for style transfer and visual reasoning. Differently, we modify the conditioning scheme to adapt it to few-shot learning, introducing $\gamma_0,\beta_0$ priors, and auxiliary co-training. In the few-shot learning context, task conditioning ideas can be traced back to~\citep{vinyals2016matching}, although in an implicit form as there is no notion of task embedding. In our work, we explicitly introduce a task representation (see Fig.~\ref{fig:cbn_architecture}) computed as the mean of the task class centroids (task prototypes). This is much simpler than individual sample level LSTM/attention models in~\citep{vinyals2016matching}. Conditioning in~\citep{vinyals2016matching} is applied as a postprocessing of the output of a fixed feature extractor. We propose to condition the feature extractor by predicting its own batch normalization parameters thus making feature extractor behaviour task-dynamic without cumbersome fine-tuning on support set. In order to train the task conditioned architecture we use multitask training with a usual 64-way classification task. Even though auxiliary co-training is beneficial for learning in general, ``little is known on \emph{when} multitask learning works and whether there are data characteristics that help to determine its success''~\citep{plank2017when}. We show that combining task conditioning and auxiliary co-training is beneficial in the context of few-shot learning.

The scaling and temperature adjustment in the softmax was discussed by~\citet{hinton2015distilling} in the context of model distillation. We propose to use it in the context of the few-shot learning scenario and provide novel theoretical and empirical results quantifying the effects of scaling parameter.

The rest of the paper is organized as follows. Section~\ref{sec:model} describes our contributions in detail. Section~\ref{sec:experiments} highlights the importance of each contribution via an ablation study. The study is performed over two different benchmarks in the regime of 1-shot, 5-shot and 10-shot learning to verify if conclusions hold across different setups. Finally, Section~\ref{sec:conclusions} concludes the paper and outlines future research directions.

\section{Model Description}\label{sec:model}

\subsection{Metric Scaling}\label{ssec:theory_metric_scaling}
\citet{snell2017prototypical} using approach described in detail in Section~\ref{ssec:definitions} found that the Euclidean distance outperformed the cosine distance used in \citet{vinyals2016matching}. We hypothesize that the improvement could be directly attributed to the interaction of the different scaling of the metrics with the softmax. Moreover, the dimensionality of the output is known to have a direct impact on the output scale even for the Euclidean distance \citep{vaswani2017attention}. Hence, we propose to scale the distance metric by a learnable temperature, $\alpha$, $p_{\phi,\alpha}(y=k|\vec{x})=\softmax(-\alpha d(\vec{z}, \mathbf{c}_k))$, to enable the model to learn the best regime for each similarity metric, thus improving the performances of all metrics. To further understand the role of $\alpha$, we analyze the class-wise cross-entropy loss function, $J_{k}(\phi,\alpha)$,\footnote{Note that the total loss is simply $J(\phi,\alpha) = \sum_{k} J_{k}(\phi,\alpha)$}
\begin{align} \label{eqn:softmax_with_alpha_cost_by_class}
    J_{k}(\phi,\alpha) = \sum_{\vec{x}_{i} \in \query{k}} \Big[ \alpha \metric(f_{\phi}(\vec{x}_{i}), \vec{c}_k)  + \log \sum_{j} \exp(-\alpha \metric(f_{\phi}(\vec{x}_i), \vec{c}_j)) \Big],
\end{align}
where $\query{k} = \{ (\vec{x}_i, y_i) \in \query{} : y_i = k \}$ is the \emph{query} set corresponding to the class $k$. Its gradient, which is used to update parameters $\phi$ is given by the following expression:
\begin{align} \label{eqn:softmax_with_alpha_total_derivative}
    \frac{\partial}{\partial\phi}J_k(\phi,\alpha) &= \alpha \sum_{\vec{x}_{i} \in \query{k}} \left[ \frac{\partial}{\partial\phi} \metric(f_{\phi}(\vec{x}_{i}), \vec{c}_k)  - \frac{\sum_{j} \exp(-\alpha \metric(f_{\phi}(\vec{x}_i), \vec{c}_j)) \frac{\partial}{\partial\phi} \metric(f_{\phi}(\vec{x}_{i}), \vec{c}_j)}{\sum_{j} \exp(-\alpha \metric(f_{\phi}(\vec{x}_i), \vec{c}_j))} \right].
\end{align}
At first glance, the effect of $\alpha$ on the expression of the derivative is twofold: (i) an overall scaling, and (ii) regulating the sharpness of weighting in the second term inside the brackets on the RHS. Below we explore the behavior of the $\alpha$-normalized\footnote{The effect of $\alpha$-related gradient scaling is trivial.}  gradient in the limits $\alpha \to 0$ and $\alpha \to \infty$. 

\begin{lemma}[Metric scaling] \label{lem:metric_scaling}
If the following assumptions hold: 

$\mathcal{A}_1: \metric(f_{\phi}(\vec{x}), \vec{c}_k) \neq \metric(f_{\phi}(\vec{x'}), \vec{c}_k), \forall k, \vec{x} \neq \vec{x'} \in \query{k}; \ \ \ \ \mathcal{A}_2: \left|\frac{\partial}{\partial\phi} \metric(f_{\phi}(\vec{x}), \vec{c}) \right| < \infty, \forall \vec{x}, \vec{c}, \phi,$

then it is true that:
\begin{align} 
    \lim_{\alpha \to 0} \frac{1}{\alpha}\frac{\partial}{\partial\phi}J_k(\phi,\alpha) &= \sum_{\vec{x}_i \in \query{k}} \Big[\frac{K-1}{K} \frac{\partial}{\partial\phi} \metric(f_{\phi}(\vec{x}_{i}), \vec{c}_k)  - \frac{1}{K} \sum_{j \neq k} \frac{\partial}{\partial\phi} \metric(f_{\phi}(\vec{x}_{i}), \vec{c}_j) \Big], \label{eqn:softmax_with_alpha_by_class_derivative_lim_0_result}  \\
    \lim_{\alpha \to \infty} \frac{1}{\alpha}\frac{\partial}{\partial\phi}J_k(\phi,\alpha) &= \sum_{\vec{x}_i \in \query{k}} \Big[ \frac{\partial}{\partial\phi} \metric(f_{\phi}(\vec{x}_{i}), \vec{c}_k)  - \frac{\partial}{\partial\phi} \metric(f_{\phi}(\vec{x}_{i}), \vec{c}_{j_{i}^*}) \Big]; \label{eqn:softmax_with_alpha_by_class_derivative_lim_infty_result} 
\end{align}
where $j_{i}^* = \arg\min_j \metric(f_{\phi}(\vec{x}_i), \vec{c}_j)$. 
\end{lemma}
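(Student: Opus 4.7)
The plan is to rewrite the bracketed second term in \eqref{eqn:softmax_with_alpha_total_derivative} as a softmax-weighted average over class-centroid gradients, and then take the pointwise limit of those weights in each regime. Concretely, setting $d_{ij} \equiv \metric(f_{\phi}(\vec{x}_i), \vec{c}_j)$ and
$w_{ij}(\alpha) = \exp(-\alpha d_{ij}) / \sum_{j'} \exp(-\alpha d_{ij'})$, I would first express
\[
    \frac{1}{\alpha}\frac{\partial}{\partial\phi}J_k(\phi,\alpha)
    = \sum_{\vec{x}_{i} \in \query{k}} \Big[ \tfrac{\partial}{\partial\phi} d_{ik}
        - \sum_{j} w_{ij}(\alpha)\, \tfrac{\partial}{\partial\phi} d_{ij} \Big].
\]
Because $\query{k}$ is finite and, by $\mathcal{A}_2$, every $\partial d_{ij}/\partial\phi$ is bounded in norm, the dominated convergence (or simply the finite-sum limit law) lets me pass $\lim_{\alpha}$ through both sums, so the entire argument reduces to computing $\lim_{\alpha \to 0} w_{ij}(\alpha)$ and $\lim_{\alpha \to \infty} w_{ij}(\alpha)$.

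For the $\alpha \to 0$ case I would use continuity of $\exp$: each $\exp(-\alpha d_{ij}) \to 1$, so $w_{ij}(\alpha) \to 1/K$ uniformly in $j$. Substituting this into the rewritten gradient and peeling the $j=k$ summand out of $\sum_j$ produces
$\frac{\partial}{\partial\phi} d_{ik} - \frac{1}{K}\sum_j \frac{\partial}{\partial\phi} d_{ij}
= \frac{K-1}{K}\frac{\partial}{\partial\phi} d_{ik} - \frac{1}{K}\sum_{j\neq k}\frac{\partial}{\partial\phi} d_{ij}$, which is exactly \eqref{eqn:softmax_with_alpha_by_class_derivative_lim_0_result}.

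For the $\alpha \to \infty$ case I rewrite $w_{ij}(\alpha) = \big(\sum_{j'} \exp(\alpha(d_{ij}-d_{ij'}))\big)^{-1}$. Assumption $\mathcal{A}_1$ is what ensures the $\arg\min$ $j_i^*$ is well defined and unique, so that $d_{ij_i^*} - d_{ij'} < 0$ strictly for every $j' \neq j_i^*$. Thus every non-minimizer term in the denominator of $w_{ij_i^*}(\alpha)$ decays to $0$ while the $j'=j_i^*$ term equals $1$, giving $w_{ij_i^*}(\alpha) \to 1$; symmetrically, for $j \neq j_i^*$, the $j'=j_i^*$ term in the denominator of $w_{ij}(\alpha)$ blows up like $\exp(\alpha(d_{ij}-d_{ij_i^*}))$, forcing $w_{ij}(\alpha) \to 0$. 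Plugging these weights in collapses $\sum_j w_{ij}(\alpha) \partial d_{ij}/\partial\phi$ to $\partial d_{ij_i^*}/\partial\phi$, yielding \eqref{eqn:softmax_with_alpha_by_class_derivative_lim_infty_result}.

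The only delicate step is the $\alpha\to\infty$ limit: it hinges on the uniqueness of $j_i^*$, which is precisely what $\mathcal{A}_1$ secures (ties would give a limit that is an average over the tied minimizers rather than a single argmin term), while $\mathcal{A}_2$ is needed so that the vanishing/concentrating weights actually translate into a finite limiting gradient rather than a $0 \cdot \infty$ indeterminate form. Everything else is routine manipulation of the softmax.
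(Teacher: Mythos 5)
Your proposal is correct and follows essentially the same route as the paper's own proof: both rewrite the second term as a softmax-weighted average of the gradients $\frac{\partial}{\partial\phi}\metric(f_{\phi}(\vec{x}_i),\vec{c}_j)$, send the weights to the uniform $1/K$ as $\alpha\to 0$, and show they concentrate on the unique minimizer $j_i^*$ as $\alpha\to\infty$, invoking $\mathcal{A}_2$ to rule out a $0\cdot\infty$ indeterminacy and $\mathcal{A}_1$ (read as a no-ties condition) for uniqueness of the argmin. Your explicit treatment of the $j\neq j_i^*$ weights vanishing is a slightly more complete bookkeeping than the paper's, but the argument is the same.
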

\begin{proof}
Please refer to Appendix~\ref{ssec:proof_of_metric_scaling_lemma}.
\end{proof}
  
From Eq.~\eqref{eqn:softmax_with_alpha_by_class_derivative_lim_0_result}, it is clear that for small $\alpha$ values, the first term minimizes the embedding distance between query samples and their corresponding prototypes. The second term maximizes the embedding distance between the samples and the prototypes of the non-belonging categories. % This is similar to the Magnet Loss proposed by \citet{rippel2015metric}.
For large $\alpha$ values (Eq.~\eqref{eqn:softmax_with_alpha_by_class_derivative_lim_infty_result}), the first term is the same as in Eq.~\eqref{eqn:softmax_with_alpha_by_class_derivative_lim_0_result}; while the second term maximizes the distance of the sample with the closest wrongly assigned prototype $\mathbf{c}_{j^*_i}$ (if any). If $j^*_i = k$ (no error), the derivative contribution of the point $\vec{x}_{i}$ is zero. This is equivalent to learning \emph{only} from the hardest examples resulting in association errors. % This can be seen as a form of hard negative mining \cite{schroff2015facenet}.
Thus, the two different regimes of $\alpha$ favor either minimizing the overlap of the sample distributions or correcting cluster assignments sample-wise. 

The large $\alpha$ regime is more directly related to resolving the few-shot classification errors. At the same time, the update strategy generated in this regime has a drawback. As the optimization proceeds and the classification accuracy increases, the number of incorrectly classified samples reduces on average, and this leads to the reduction in the average effective batch size (more samples generate zero derivatives). Therefore, our hypothesis is that there is an optimal value of scaling parameter $\alpha$ for a given combination of dataset, metric and task. Section~\ref{ssec:multitask_ablation} empirically demonstrates that the optimal value of $\alpha$ indeed exists and it can be \emph{e.g.} cross-validated on a validation set.

\subsection{Task conditioning} \label{ssec:tbn_architechture}
\begin{figure}[t]
    \centering
    % \begin{subfigure}[t]{0.45\textwidth}
    %     \centering
    %     \includegraphics[width=\textwidth]{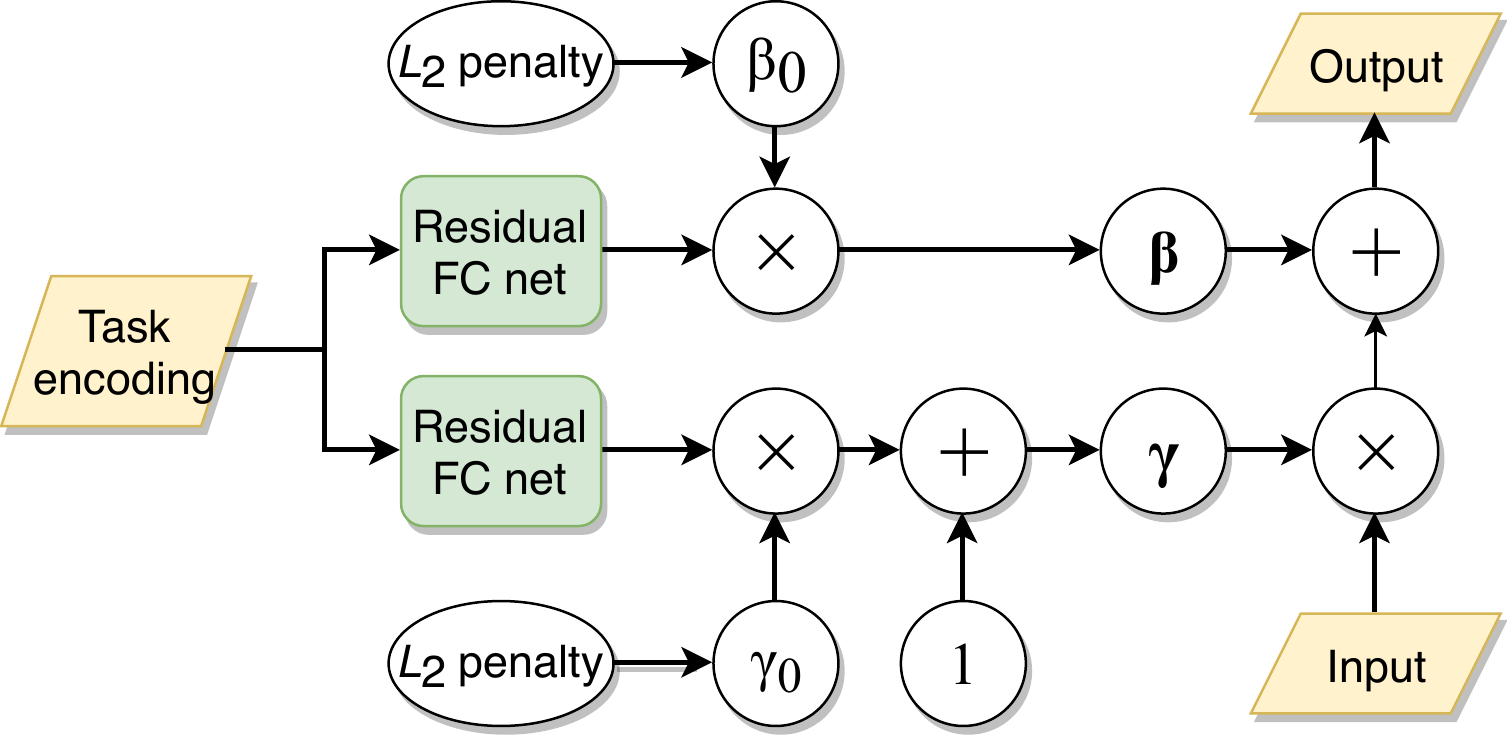}
    %     \caption{Architecture of the TEN block.}
    %     \label{fig:tbn_block}
    % \end{subfigure} \hspace{0.08\textwidth}
    % \begin{subfigure}[t]{0.45\textwidth}
    %     \centering
    %     \includegraphics[width=\textwidth]{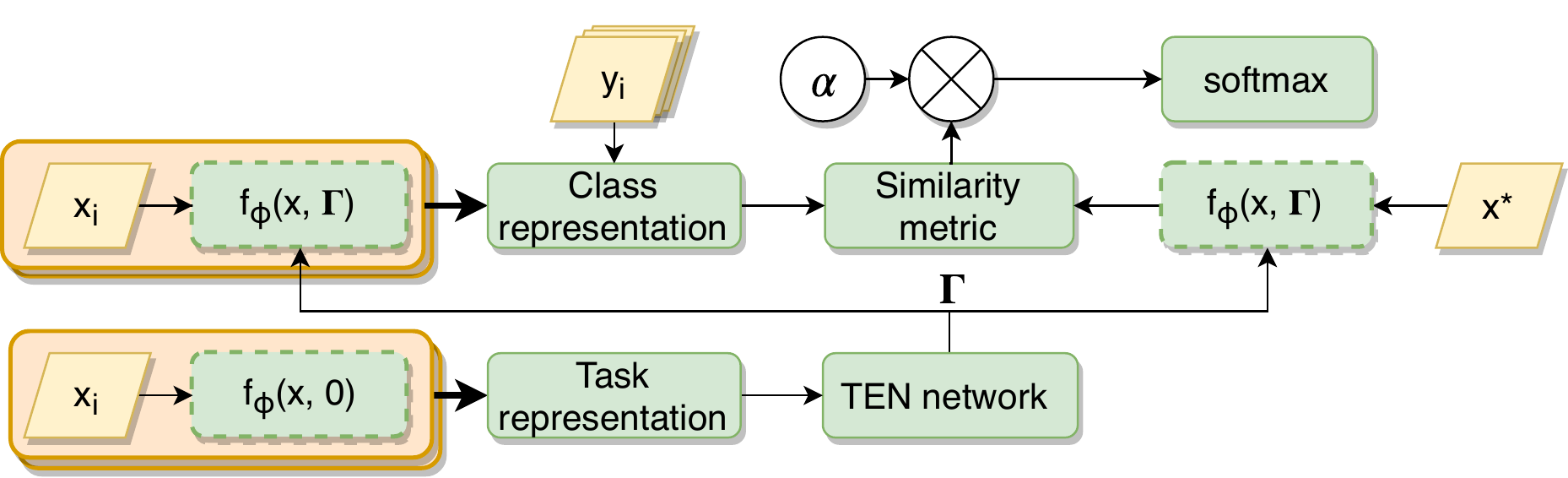}
    %     \caption{Overall architecture.}
    %     \label{fig:cbn_architecture}
    % \end{subfigure}
    \includegraphics[width=\textwidth]{cbn_neural_net.pdf}
    \caption{Proposed few-shot architecture. Blocks with shared parameters have dashed border.}
    \label{fig:cbn_architecture}
    % \label{fig:architecture_full_and_ten}
\end{figure}

Up until now we assumed the feature extractor $f_{\phi}(\cdot)$ to be task-independent. A dynamic task-conditioned feature extractor should be better suited for finding correct associations between given \emph{sample} set class representations and query samples, this is implicitly done by \citet{vinyals2016matching} with a bidirectional LSTM as a postprocessing of a fixed feature extractor. Differently, we explicitly define a dynamic feature extractor $f_{\phi}(\vec{x}, \Gamma)$, where $\Gamma$ is the set of parameters predicted from a task representation such that the performance of $f_{\phi}(\vec{x}, \Gamma)$ is optimized given the task \emph{sample} set $\examples{}$. This is related to the FILM conditioning layer \citep{perez2017film} and conditional batch normalization~\citep{Dumoulin2017learned,Perez2017LearningVR} of the form $h_{\ell+1} = \bm{\gamma} \odot h_{\ell} + \beta$, where $\bm{\gamma}$ and $\bm{\beta}$ are scaling and shift vectors applied to the layer $h_{\ell}$. Concretely, we propose to use the mean of the class prototypes as the \emph{task representation}, $\overline{\vec{c}} = \frac{1}{K}\sum_k \vec{c}_k$, encode it with a task embedding network (TEN), and predict layer-level element-wise scale and shift vectors $\bm{\gamma}, \bm{\beta}$ for each convolutional layer in the feature extractor (see Figures~\ref{fig:ten_resnet_details} and~\ref{fig:tbn_block} in the Supplementary Materials, Section~\ref{ssec:components_of_architecture}). The task representation defined as the mean of task class centroids (i) reduces the dimensionality of the TEN input and (ii) replaces expensive RNN/CNN/attention modeling. On the other hand, it is an effective way to cluster tasks. Tasks having larger number of similar classes in common will tend to cluster closer in the task representation space.
 
Our implementation of the TEN (see Supplementary Materials, Section~\ref{ssec:components_of_architecture} for more details) uses two separate fully connected residual networks to generate vectors $\bm{\gamma}, \bm{\beta}$. Following the terminology in~\citep{Perez2017LearningVR}, the $\bm{\gamma}$ parameter is learned in the delta regime, \emph{i.e.} predicting deviation from unity. The most critical component in being able to successfully train the TEN was the addition of the scalar $L_2$ penalized post-multipliers $\gamma_0$ and $\beta_0$. They limit the effect of $\bm{\gamma}$ (and $\bm{\beta}$) by encoding a prior belief that all components of $\bm{\gamma}$  (and $\bm{\beta}$) should be simultaneously close to zero for a given layer unless task conditioning provides a significant information gain for this layer. Mathematically, this can be expressed as $\bm{\beta} = \beta_0 g_{\theta}(\overline{\vec{c}})$ and $\bm{\gamma} = \gamma_0 h_{\varphi}(\overline{\vec{c}}) + 1$, where $g_{\theta}$ and $h_{\varphi}$ are predictors of $\bm{\beta}$ and $\bm{\gamma}$.

\subsection{Architecture} \label{ssec:theory_architecture}

The overall proposed few-shot classification architecture is depicted in Fig.~\ref{fig:cbn_architecture} (see Supplementary Materials, Section~\ref{ssec:components_of_architecture} for more details). We employ ResNet-12~\citep{He2016Deep} as the backbone feature extractor. It has 4 blocks of depth 3 with 3x3 kernels and shortcut connections. 2x2 max-pool is applied at the end of each block. Convolutional layer depth starts with 64 filters and is doubled after every max-pool. Note that this architecture is similar in spirit to architectures used in~\citep{Bauer2017discriminative} and \citep{munkhdalai2018rapid}, but we do not use any projection layers before or after the main backbone ResNet. On the first pass over sample set, the TEN predicts the values of $\gamma$ and $\beta$ parameters for each convolutional layer in the feature extractor from the task representation. Next, the \emph{sample} set and the \emph{query} set are processed by the feature extractor conditioned with the values of $\gamma$ and $\beta$ just generated. Both outputs are fed into a similarity metric to find an association between class prototypes and query instances. The output of similarity metric is scaled by scalar $\alpha$ and is fed into a softmax layer.

\begin{table}[t!]
\centering
\label{table:key_results}
\caption{mini-Imagenet (\citet{vinyals2016matching}), 5-way classification results. $^\dagger$Our re-implementation.}
\label{table:sota}
\begin{tabular}{@{}lccc@{}r}
\toprule
\multicolumn{1}{c}{} &  \multicolumn{1}{c}{1-shot} & \multicolumn{1}{c}{5-shot} & \multicolumn{1}{c}{10-shot} \\ \midrule 
Meta Nets \citep{ravi2016optimization} & 43.4 & 60.6 & -  \\
Matching Networks \citep{vinyals2016matching} & 46.6 & 60.0 & - \\
MAML \citep{finn2017model} & 48.7 & 63.1 & -  \\
Proto Nets \citep{snell2017prototypical} & 49.4 & 68.2 & $74.3^{\dagger}$  \\
Relation Net \citep{sung2018learning} & 50.4 & 65.3 & - \\ 
SNAIL \citep{mishra2018simle} & 55.7 & 68.9 & - \\ 
Discriminative k-shot \citep{Bauer2017discriminative} & 56.3 & 73.9 & 78.5 \\ 
adaResNet \citep{munkhdalai2018rapid} & 56.9 & 71.9 & - \\ \midrule
Ours & \textbf{58.5} & \textbf{76.7} &  \textbf{80.8} \\ \bottomrule
\end{tabular} 
\end{table}

\subsection{Auxiliary task co-training} \label{ssec:theory_multitask}

The TEN (Section~\ref{ssec:tbn_architechture}) introduces additional complexity into the architecture via task conditioning layers inserted after the convolutional and batch norm blocks. We empirically observed that simultaneously optimizing convolutional filters and the TEN is overly challenging. We solved the problem by auxiliary co-training with an additional logit head (the normal 64-way classification in mini-Imagenet case). The auxiliary task is sampled with a probability that is annealed over episodes. We annealed it using an exponential decay schedule of the form $0.9^{\floor{20t/T}}$, where $T$ is the total number of training episodes, $t$ is episode index. The initial auxiliary task selection probability was cross-validated to be $0.9$ and the number of decay steps was chosen to be 20. We observed significant positive effects from the auxiliary task co-training (please refer to Section~\ref{ssec:multitask_ablation}). The same positive effects were not observed with simple pre-training of the feature extractor. We attribute this to the regularization effects achieved via back-propagating auxiliary task gradients together with those of the main task.

It is of interest to note that the few-shot co-training with an auxiliary classification task is related to curriculum learning~\citep{Santoro16metalearning}. The auxiliary classification problem could be considered a part of a simpler curriculum that helps the learner acquire minimal skill level necessary before tackling on harder few-shot classification tasks. Being effective at feature extraction (i.e. at task representation) forms a ``prerequisite'' at being effective at re-conditioning features based on the representation of a given task.

\section{Experimental Results}\label{sec:experiments}

Table~\ref{table:sota} presents our key result in the context of existing state-of-the art. The five first rows show approaches that use the same feature extractor as \citep{vinyals2016matching}, \emph{i.e.} four stacked convolutions layers of 64 filters (32 in \cite{ravi2016optimization, finn2017model} to avoid overfitting). In the following rows we include models like the one we propose, which is based on resnet \cite{He2016Deep}. Concretely, SNAIL \cite{mishra2018simle}, adaResNet \cite{munkhdalai2018rapid}, and our architecture use four residual blocks of three stacked $3\times3$ convolutional layers, each block followed by max pooling. Differently, the feature extractor proposed in \cite{Bauer2017discriminative} is based on a ResNet-34 architecture with a reduced number of features. 

As it can be seen, the proposed algorithm significantly improves over the existing state-of-the-art results on the mini-Imagenet dataset. In the rest of the section we address the following research questions: (i) can metric scaling improve few-shot classification results? (Sections~\ref{ssec:cosine_distance} and~\ref{ssec:multitask_ablation}), (ii) what are the contributions of each components of our proposed architecture? (Section~\ref{ssec:multitask_ablation}), (iii) can task conditioning improve few-shot classification results and how important it is at different feature extractor depths? (Sections~\ref{ssec:tbn_importance_vs_depth} and~\ref{ssec:multitask_ablation}), and (iv) can auxiliary classification task co-training improve accuracy on the few-shot classification task? (Section~\ref{ssec:multitask_ablation}).

\subsection{Experimental setup and datasets}\label{ssec:datasets}

The details of the experimental and training setup are provided in Supplementary Materials, Section~\ref{ssec:training_procedure_details}. Note that we focused on mini-Imagenet~\cite{vinyals2016matching} and Fewshot-CIFAR100 (introduced below) instead of Omniglot \cite{lake2015human,vinyals2016matching, snell2017prototypical} as the former ones are more challenging, and the error rate is more sensitive to model improvements.

\textbf{mini-Imagenet.}~The mini-Imagenet dataset was proposed by~\citet{vinyals2016matching}. It has 100 classes, with 600 $84 \times 84$ images per class. Each task is generated by sampling 5 classes uniformly and 5 training samples per class, the remaining images from the 5 classes are used as query images to compute accuracy. To perform meta-validation and meta-test on unseen tasks (and classes), we isolate 16 and 20 classes from the original set of 100, leaving 64 classes for the training tasks. We use exactly the same train/validation/test split as the one suggested by~\citet{ravi2016optimization}. 

\textbf{Fewshot-CIFAR100.}~We introduce a new image based dataset based on CIFAR100~\citep{Krizhevsky2009learning} for few-shot learning. We will refer to it as FC100. The main motivation for introducing this new dataset is to validate that the main results appearing in the experimental section generalize well beyond the mini-Imagenet. The secondary motivation is that the FC100 is suited for faster few-shot scenario prototyping than the mini-Imagenet and it presents a more challenging few-shot learning problem, because of reduced image size. On top of that, we propose a class split in FC100 to minimize the information overlap between splits to make it significantly more challenging than \emph{e.g.} Omniglot. The original CIFAR100 dataset consists of $32 \times 32$ color images belonging to 100 different classes, 600 images per class. The 100 classes are further grouped into 20 superclasses. We split the dataset by superclass, rather than by individual class to minimize the information overlap. Thus the train split contains 60 classes belonging to 12 superclasses, the validation and test contain 20 classes belonging to 5 superclasses each. The exact class split is provided in Supplementary Materials, Section~\ref{sec:fc100_details}. The tasks are sampled uniformly at random within train, validation and test subsets. Therefore, each task with high probability contains samples belonging to classes from several superclasses.

\subsection{On the similarity metric}\label{ssec:cosine_distance}
% \subsection{Make cosine distance great again}\label{ssec:cosine_distance}

\begin{table}[t]
% This table is based on 
% https://github.com/ElementAI/deep-prior/tree/task_embedding
% commit 640a814a526572f0a6c354ec4e47621c9a9dba96
% imagenet results in /mnt/home/boris/experiments_task_encoder/180326_012431_mini_imagenet_encoder_classifier_link_number_of_steps_data_dir_metric_multiplier_init_metric_multiplier_trainable_repeat_num_classes_train_simple_conv_net_mini_imagenet_baseline_with_relu/
% cifar100 results in /mnt/home/boris/experiments_task_encoder/180326_170423_mini_imagenet_encoder_classifier_link_number_of_steps_metric_multiplier_init_metric_multiplier_trainable_repeat_num_classes_train_simple_conv_net_cifar100/
    \centering
    \caption{Average classification accuracy in percent with 95\% confidence interval. 5-shot, 5 way classification task. The three last rows correspond to our implementation, first with euclidean distance, second with cosine distance, and third with the scaled cosine distance.}
    \label{table:cosine_vs_euclidian}
    \begin{tabular}{lccccr} 
        \toprule
         & \multicolumn{2}{c}{mini-Imagenet}  & \multicolumn{2}{c}{FC100} &  \\ 
         & 5-way train    &  20-way train   & 5-way train &  20-way train  \\ 
        \hline
        Proto Nets \cite{snell2017prototypical} & 65.8 $\pm$ 0.7 & 68.2 $\pm$ 0.7 & {N/A}  & {N/A} \\
        \hline
        Proto Nets & 67.7 $\pm$ 0.2 & 68.9 $\pm$ 0.3 & 51.1 $\pm$ 0.2  & 50.3 $\pm$ 0.3  \\
        %Matching Networks \cite{vinyals2016matching} &  51.1 $\pm$ 0.7   & N/A   & N/A  & N/A  \\ 
        %\hline 
        %Matching Networks FCE \cite{vinyals2016matching} &  55.3 $\pm$ 0.7   & N/A   & N/A  & N/A  \\ 
        Prototypical Cosine & 54.5 $\pm$ 1.1   & 53.9 $\pm$ 0.6   & 40.9 $\pm$ 0.6   & 37.1 $\pm$ 1.9 \\ 
        Prototypical Cosine Scaled &  68.2 $\pm$ 0.8  & 68.1 $\pm$ 0.7  & 51.0 $\pm$ 0.6  & 49.6 $\pm$ 0.5 \\  
        \bottomrule
    \end{tabular}
\end{table}

We re-implemented prototypical networks~\citep{snell2017prototypical}, and use the Euclidean and the cosine similarity to test the effects of scaling (see Section~\ref{sec:model}). We closely follow the experimental setup defined by~\citet{snell2017prototypical} (same feature extractor and training procedure). The scaling parameter $\alpha$ used on the last row was cross-validated on the validation set. Results are presented in Table~\ref{table:cosine_vs_euclidian}.

As it can be seen in row two of Table~\ref{table:cosine_vs_euclidian}, our re-implementation of Proto Nets \citep{snell2017prototypical} obtained slightly better performance (68.9\% and 67.7\%) in 20-way and 5-way training scenarios respectively by increasing the number of training steps from 20K to 40K\footnote{With 20K steps it was possible to recover the exact original performance reported in \citet{snell2017prototypical}, which is not included in Table~\ref{table:cosine_vs_euclidian} for the sake of brevity.}.

Importantly, we confirm the hypothesis that the improvement attributed to the Euclidean distance in \cite{snell2017prototypical} was due to a scaling effect. Namely, we show that the scaled cosine similarity matches very closely the performance of the Euclidean metric, with an improvement of 14 percentage points on the mini-Imagenet (similar results on FC100) over the non-scaled version. In order to control for the potential effect that the scaling parameter $\alpha$ may have on the learning rate as indicated by Equation~\eqref{eqn:softmax_with_alpha_total_derivative} training was performed using multiple initial learning rates (covering the range between 0.0005 and 0.01), obtaining similar accuracy each time.  Hereinafter, we report the results with the Euclidean metric for brevity, since the cosine produces similar results. Moreover, since the prototypical approach with Euclidean distance as well as with the scaled cosine are close and both are superior to~\cite{vinyals2016matching}, we base our results on \cite{snell2017prototypical}.

\subsection{TEN importance across layers} \label{ssec:tbn_importance_vs_depth}

\begin{figure}[t]
    \centering
    \begin{subfigure}[t]{0.49\textwidth}
        \includegraphics[width=\textwidth]{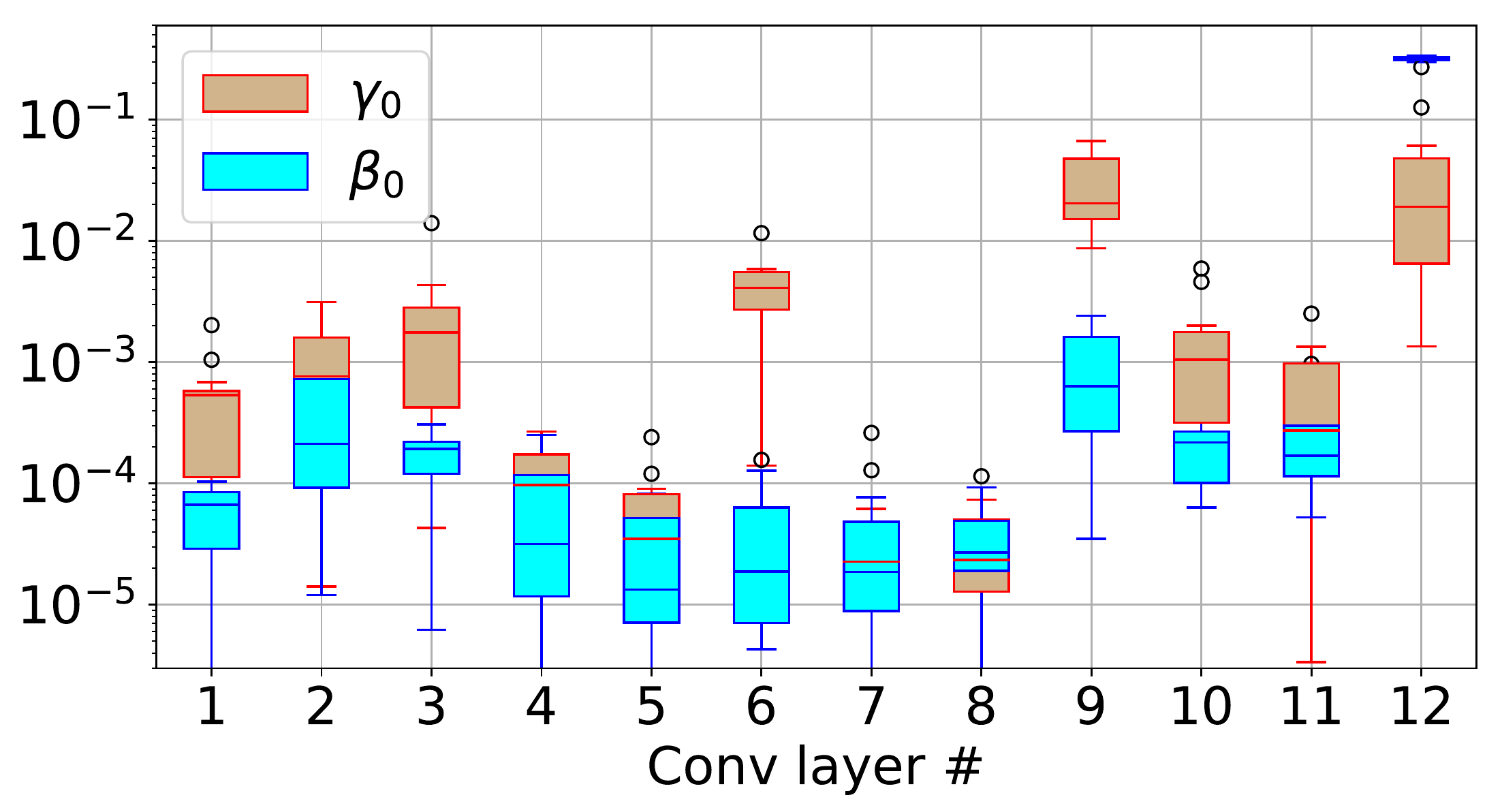}
        \caption{Results on mini-Imagenet.}
        \label{fig:tbn_scaling_miniimagenet}
    \end{subfigure}
    \begin{subfigure}[t]{0.49\textwidth}
        \includegraphics[width=\textwidth]{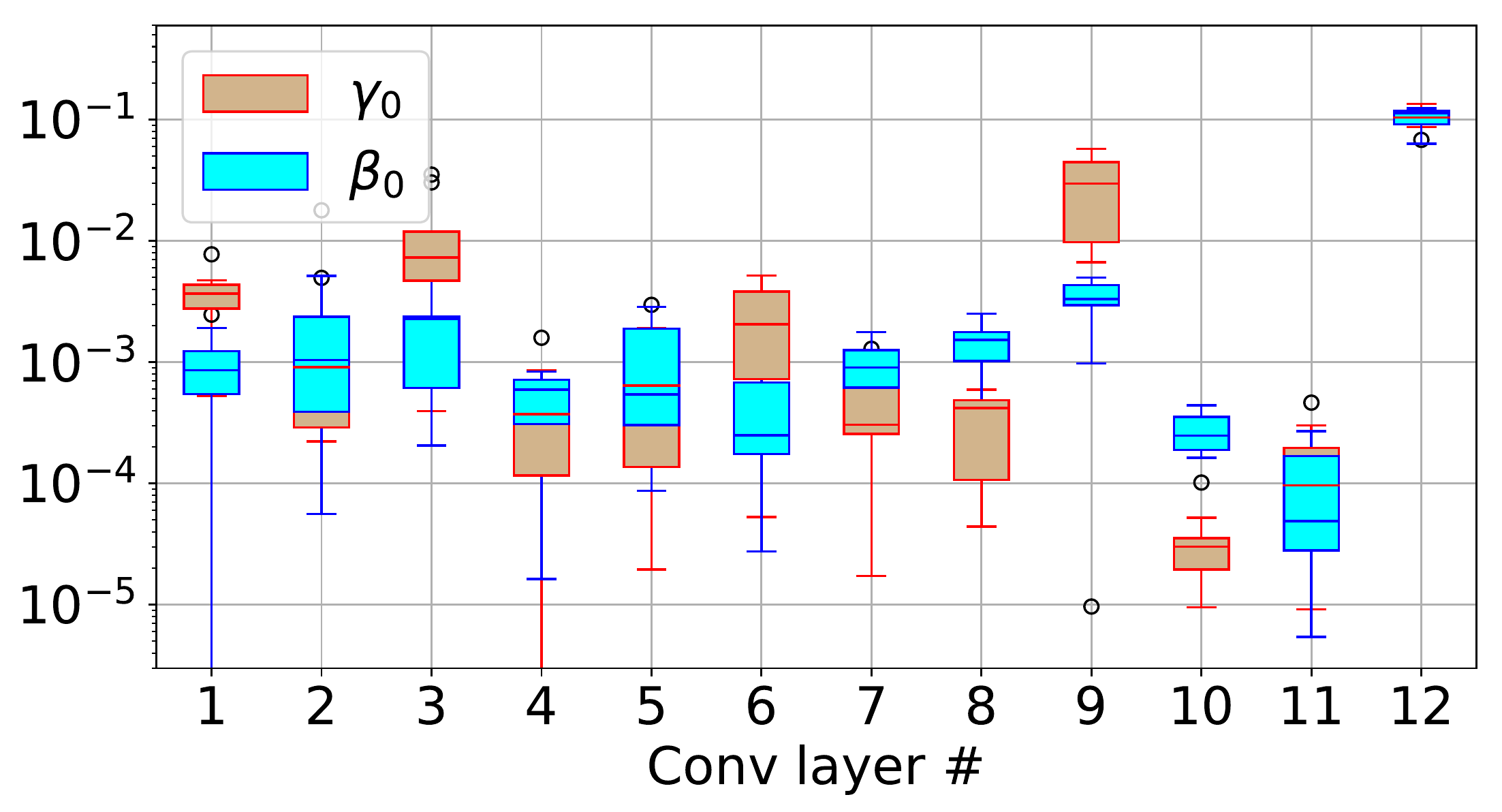}
        \caption{Results on FC100.}
        \label{fig:tbn_scaling_cifar100}
    \end{subfigure}
    \caption{Distribution of the absolute values of the TEN scaling and bias parameters $\gamma_0$ and $\beta_0$ across layers of ResNet feature extractor. X-axes depict layer number in both subplots. Higher convolutional layers are located closer to the final softmax layer.}
    \label{fig:tbn_scaling}
\end{figure}

We hypothesized in Section~\ref{ssec:tbn_architechture} that the TEN conditioning should not be equally important at all depths. Fig.~\ref{fig:tbn_scaling} depicts the boxplot of the empirical observations of the learned TEN post-multipliers\footnote{Larger absolute values of $\gamma_0$ and $\beta_0$ imply a larger influence of their respective TEN layers} $\gamma_0$ and $\beta_0$ at different depths of the feature extractor. We can see that for the multiplier $\gamma$, the absolute value of its scale $\gamma_0$ tends to increase as we approach the softmax layer. Interestingly, peaks can be observed every 3 layers (layers 3, 6, 9, 12). The peaks correspond to the location of the convolutional layers preceding the max-pool layers. For the bias parameter $\beta_0$, the only layer having a large absolute value of its scale is the last layer, before the softmax. We attribute the observed pattern to the fact that the shallower layers in the feature extractor tend to be less task-specific than the deeper layers. Following this intuition, we performed experiments in which we (i) kept the TEN injection solely in layers preceding the max pool and (ii) kept the TEN injection only in the very last layer. Interestingly, we saw that TEN layers with small weight still provide some positive contribution, although most of the contribution is indeed provided by the layers preceding the max pool operation.

\subsection{Ablation study} \label{ssec:multitask_ablation}

\begin{table}[t] 
% This table is based on 
% https://github.com/ElementAI/deep-prior/tree/task_embedding
% commit 9ef29e5c2c7656dfeceeb88ff096fe986ba10e36
%
% imagenet 5-shot results in 
% No multitask, polynomial, double the number of steps: ROOT_DIR='/mnt/home/boris/experiments_task_encoder/180508_195508_mini_imagenet_num_max_pools_encoder_classifier_link_feat_extract_pretrain_metric_multiplier_init_cbn_num_layers_repeat_scale_maxpool3'
%
% imagenet 1-shot results in 
% ROOT_DIR='/mnt/home/boris/experiments_task_encoder/180503_102004_mini_imagenet_num_tasks_per_batch_encoder_classifier_link_feat_extract_pretrain_train_batch_size_metric_multiplier_init_number_of_steps_cbn_num_layers_repeat_scale_1shot_first_run'
% 
% cifar100 1-shot results in 
% 180505_022014_mini_imagenet_num_tasks_per_batch_encoder_classifier_link_feat_extract_pretrain_train_batch_size_metric_multiplier_init_number_of_steps_cbn_num_layers_repeat_scale_crossvalidation_CIFAR100_1shot
%
% imagenet 10-shot results in
% ROOT_DIR='/mnt/home/boris/experiments_task_encoder/180506_020918_mini_imagenet_encoder_classifier_link_feat_extract_pretrain_metric_multiplier_init_init_learning_rate_number_of_steps_cbn_num_layers_repeat_scale_10shot'
%
% CIFAR100 10-shot results in
% ROOT_DIR='/mnt/home/boris/experiments_task_encoder/180507_015250_mini_imagenet_num_tasks_per_batch_encoder_classifier_link_feat_extract_pretrain_metric_multiplier_init_init_learning_rate_number_of_steps_cbn_num_layers_repeat_scale_crossval_CIFAR100_10shot'

    \centering
    \caption{Average classification accuracy (\%) with 95\% confidence interval on the 5 way classification task, and training with the Euclidean distance. The scale parameter is cross-validated on the validation set. AT: auxiliary co-training. TC: task conditioning with TEN.}
    \label{table:multitask_results}
    \begin{tabular}{ccccccccc} 
        \toprule
        \multicolumn{1}{c}{} & \multicolumn{1}{c}{} & \multicolumn{3}{c}{mini-Imagenet}  & \multicolumn{3}{c}{FC100}  \\ 
        $\alpha$ & AT & TC & 1-shot    &  5-shot & 10-shot   & 1-shot &  5-shot & 10-shot  \\ \hline
         & & & 56.5 $\pm$ 0.4 & 74.2 $\pm$ 0.2 & 78.6 $\pm$ 0.4 & 37.8 $\pm$ 0.4 & 53.3 $\pm$ 0.5 & 58.7 $\pm$ 0.4   \\
        \checkmark & & & 56.8 $\pm$ 0.3 & 75.7 $\pm$ 0.2 & 79.6 $\pm$ 0.4 & 38.0 $\pm$ 0.3 & 54.0 $\pm$ 0.5 & 59.8 $\pm$ 0.3   \\
        \hline
        \checkmark & \checkmark & & 58.0 $\pm$ 0.3 & 75.6 $\pm$ 0.4 & 80.0  $\pm$ 0.3  & 39.0 $\pm$ 0.4 & 54.7 $\pm$ 0.5 & 60.4 $\pm$ 0.4   \\
        \checkmark &  & \checkmark & 54.4 $\pm$ 0.3 & 74.6 $\pm$ 0.3 & 78.7 $\pm$ 0.4  & 37.8 $\pm$ 0.2 & 54.0 $\pm$ 0.7 & 58.8 $\pm$ 0.3   \\
        \checkmark & \checkmark & \checkmark & \textbf{58.5 $\pm$ 0.3} & \textbf{76.7 $\pm$ 0.3} & \textbf{80.8 $\pm$ 0.3}  & \textbf{40.1 $\pm$ 0.4} & \textbf{56.1 $\pm$ 0.4} & \textbf{61.6 $\pm$ 0.5}  \\
        \bottomrule 
    \end{tabular}
\end{table}

\begin{figure}[t]
% This is based on plot_scale_crossvalidation.ipynb
%
% MINIIMAGENET:
% ROOT_DIR='/mnt/home/boris/experiments_task_encoder/180422_013228_mini_imagenet_metric_multiplier_trainable_encoder_classifier_link_feat_extract_pretrain_polynomial_metric_order_metric_multiplier_init_cbn_per_block_cbn_num_layers_repeat_scaling_crossvalidation_min'

    \centering
    \begin{subfigure}[t]{0.49\textwidth}
        \includegraphics[width=\textwidth]{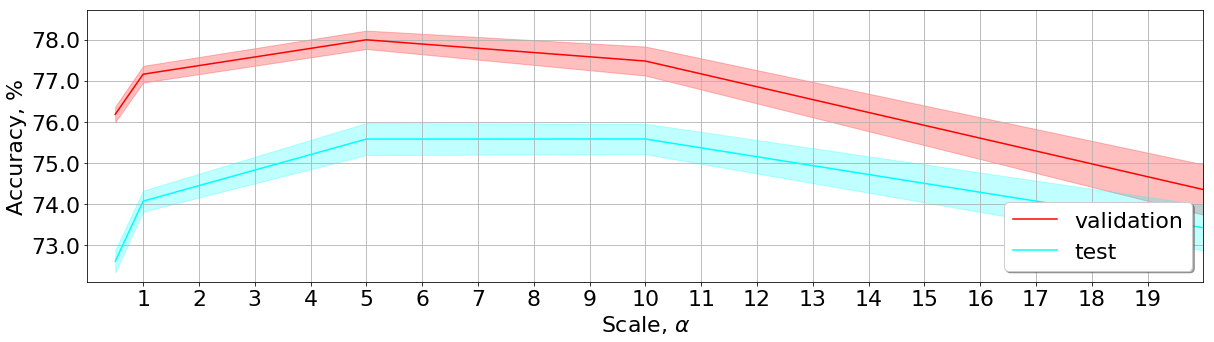}
        \caption{Scaled Euclidean. mini-Imagenet.}
        \label{fig:scaled_euclidean_miniimagenet}
    \end{subfigure}
    \begin{subfigure}[t]{0.49\textwidth}
        \includegraphics[width=\textwidth]{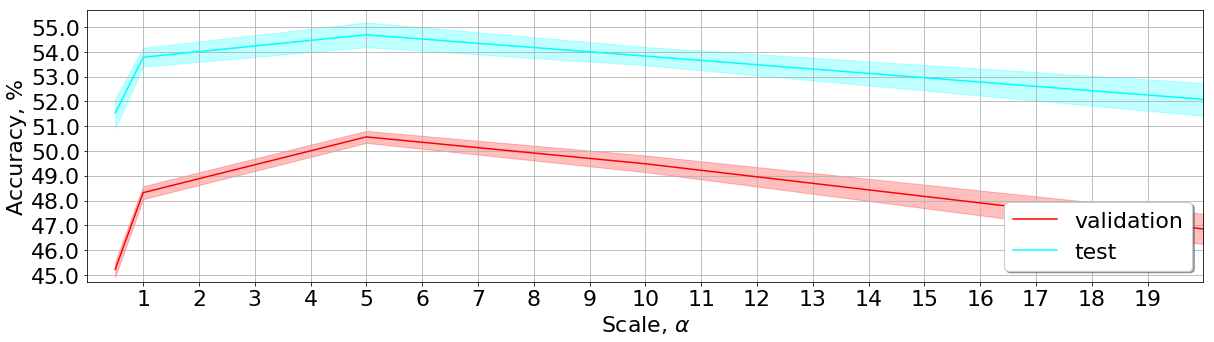}
        \caption{Scaled Euclidean. FC100.}
        \label{fig:scaled_euclidean_cifar100}
    \end{subfigure}
    \begin{subfigure}[t]{0.49\textwidth}
        \includegraphics[width=\textwidth]{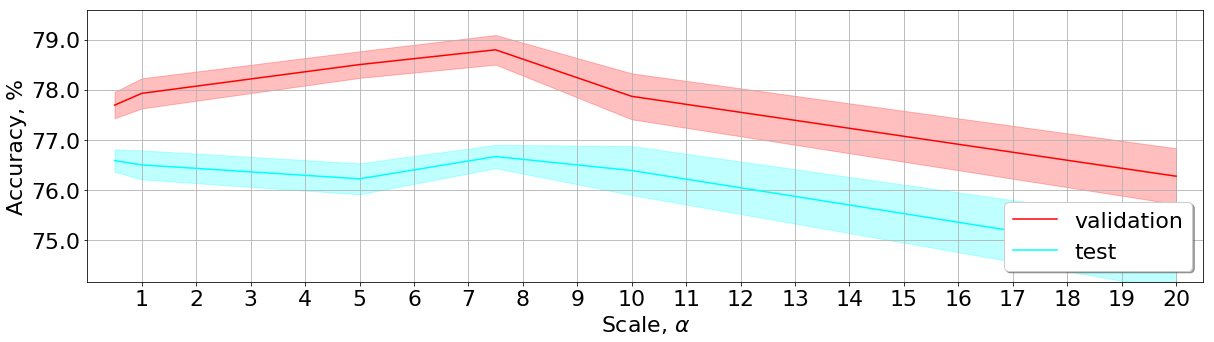}
        \caption{Scaled Euclidean with TEN. mini-Imagenet.}
        \label{fig:scaled_euclidean_with_tbn_miniimagenet}
    \end{subfigure}
    \begin{subfigure}[t]{0.49\textwidth}
        \includegraphics[width=\textwidth]{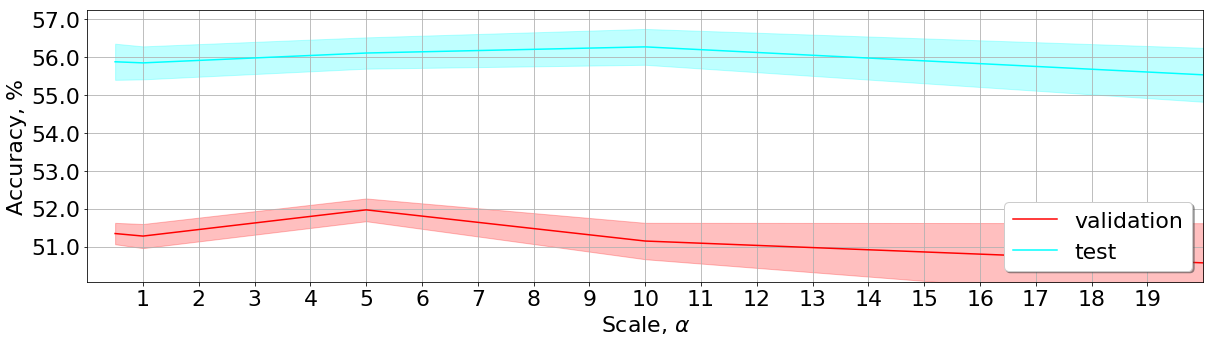}
        \caption{Scaled Euclidean with TEN. FC100.}
        \label{fig:scaled_euclidean_with_tbn_cifar100}
    \end{subfigure}
    \caption{Metric scale parameter $\alpha$ cross-validation results.}
    \label{fig:metric_scaling}
\end{figure}

% \begin{table}[t] 
% % This table is based on 
% % https://github.com/ElementAI/deep-prior/tree/task_embedding
% % commit 7418e0aee3a52cae68305d1bb8fd1ba63fe19b1d

% % imagenet results in b042ca28354cf9cc7f74ee63cc83fd2361caf033
% % ROOT_DIR='/mnt/home/boris/experiments_task_encoder/180426_135538_mini_imagenet_metric_multiplier_trainable_encoder_classifier_link_feat_extract_pretrain_polynomial_metric_order_init_learning_rate_number_of_steps_repeat_optimizer_adam_vs_sgd/'

% % cifar100 results in 
% % ROOT_DIR='/mnt/home/boris/experiments_task_encoder/180424_125559_mini_imagenet_metric_multiplier_trainable_encoder_classifier_link_feat_extract_pretrain_init_learning_rate_number_of_steps_cbn_per_block_cbn_num_layers_repeat_optimizer_adam_vs_sgd_CIFAR100/'
%     \centering
%     \begin{tabular}{p{50mm} cc} 
%         \hline
%          & \multicolumn{1}{c}{mini-Imagenet}  & \multicolumn{1}{c}{fewshot-CIFAR100}  \\ 
%         \hline
%         Prototypical Euclidean \\ ResNet-12/Adam  & 72.1 $\pm$ 0.4 & 53.6 $\pm$ 0.5   \\
%         \hline 
%         Prototypical Euclidean \\ ResNet-12/SGD  & 74.2 $\pm$ 0.4 & 53.4  $\pm$ 0.4   \\
%         \hline 
%     \end{tabular}
% \caption{Average classification accuracy in percent with 95\% confidence interval. 5-shot, 5 way classification task.}
% \label{table:resnet_feature_extractor}
% \end{table}

In this section, we study the impact in generalization accuracy of the scaling, task conditioning, auxiliary co-training, and the feature extractor. Results are summarized in Table \ref{table:multitask_results}.

First, we validated the hypothesis that there is an optimal value of the metric scaling parameter ($\alpha$) for a given combination of dataset and metric, which is reflected in the inverse U-shape of the curves in Fig. \ref{fig:metric_scaling}.

Second, we studied the effects of the task conditioning described in Section~\ref{ssec:tbn_architechture}. No improvement was observed for the task-conditioned ResNet-12 without auxiliary co-training (see Table \ref{table:multitask_results}). We observed that learning useful features for the TEN and the main feature extractor at the same time is hard and gets stuck in local extrema. The problem is solved by co-training on the auxiliary task of predicting Imagenet labels using an additional fully-connected layer with softmax, see Section \ref{ssec:theory_multitask}. In effect, we observed that auxiliary co-training provides two benefits: (i) making the initial convergence easier, and (ii) providing regularization on the few-shot learning task by forcing the feature extractor to perform well on two decoupled tasks. The latter benefit can only be observed when the feature extraction unit is sufficiently decoupled on the main task and the auxiliary task via the use of TEN (the feature extractor output is additionally adjusted on the target task using FILM).

As it can be seen in the last row of Tables \ref{table:sota} and \ref{table:multitask_results}, our model trained with TEN and auxiliary co-training outperforms all the baselines and achieves state-of-the-art results.

%Finally, we tested if better performance could be obtained with deeper feature extractors such as the ResNet-34 used by \citet{Bauer2017discriminative}, but we did not observe significant improvement in our setup.

\section{Conclusions and Future Work} \label{sec:conclusions}
We proposed, analyzed, and empirically validated several improvements in the domain of few-shot learning. We showed that the scaled cosine similarity performs at par with Euclidean distance, unlike its unscaled counterpart. In fact, based on our results, we argue that the scaling factor is a necessary standard component of any few-shot learning algorithm relying on a similarity metric and the cross-entropy loss function. This is especially important in the context of finding new more effective similarity measures for few-shot learning. Moreover, our theoretical analysis demonstrated that simply scaling the similarity metric results in completely different regimes of parameter updates when using softmax and categorical cross-entropy. We also identified that the optimal performance is achieved in between two asymptotic regimes of the softmax. This poses the research question of explicitly designing loss functions and the $\alpha$ schedules optimal for few-shot learning. We further proposed task representation conditioning as a way to improve the performance of a feature extractor on the few-shot classification task. In this context, designing more powerful task representations, for example, based on higher order statistics of class embeddings, looks like a very promising venue for future work. The experimental results obtained on two independent challenging datasets demonstrated that the proposed approach significantly improves over existing results and achieves state-of-the-art on few-shot image classification task.  

\appendix
\section*{Appendix}

\section{Proof of Lemma~\ref{lem:metric_scaling}} \label{ssec:proof_of_metric_scaling_lemma}

First, consider the case $\alpha \to 0$. Denoting $\vec{z}_i^{\phi} = f_{\phi}(\vec{x}_i)$ we have:
\begin{align} % \label{eqn:softmax_with_alpha_by_class_derivative_lim_0}
    \lim_{\alpha \to 0} \frac{1}{\alpha}\frac{\partial}{\partial\phi}J_k(\phi,\alpha) &= \sum_{\vec{x}_i \in \query{k}} \frac{\partial}{\partial\phi} \metric(\vec{z}_i^{\phi}, \vec{c}_k)  - \lim_{\alpha \to 0} \frac{\sum_{j} \exp(-\alpha \metric(\vec{z}_i^{\phi}, \vec{c}_j)) \frac{\partial}{\partial\phi} \metric(\vec{z}_i^{\phi}, \vec{c}_j)}{\sum_{j} \exp(-\alpha \metric(\vec{z}_i^{\phi}, \vec{c}_j))} \nonumber \\
    &= \sum_{\vec{x}_i \in \query{k}} \frac{\partial}{\partial\phi} \metric(\vec{z}_i^{\phi}, \vec{c}_k)  - \frac{1}{K} \sum_{j} \frac{\partial}{\partial\phi} \metric(\vec{z}_i^{\phi}, \vec{c}_j) \nonumber \\
    &= \sum_{\vec{x}_i \in \query{k}} \frac{K-1}{K} \frac{\partial}{\partial\phi} \metric(\vec{z}_i^{\phi}, \vec{c}_k)  - \frac{1}{K} \sum_{j \neq k} \frac{\partial}{\partial\phi} \metric(\vec{z}_i^{\phi}, \vec{c}_j). \nonumber
\end{align}
Second, consider the case  $\alpha \to \infty$:
\begin{align} % \label{eqn:softmax_with_alpha_by_class_derivative_lim_infty}
    \lim_{\alpha \to \infty} \frac{1}{\alpha}\frac{\partial}{\partial\phi}J_k(\phi,\alpha) &= \sum_{\vec{x}_i \in \query{k}} \frac{\partial}{\partial\phi} \metric(\vec{z}_i^{\phi}, \vec{c}_k)  - \sum_{j} \lim_{\alpha \to \infty} \frac{\exp(-\alpha \metric(\vec{z}_i^{\phi}, \vec{c}_j)) \frac{\partial}{\partial\phi} \metric(\vec{z}_i^{\phi}, \vec{c}_j)}{\sum_{\ell} \exp(-\alpha \metric(\vec{z}_i^{\phi}, \vec{c}_\ell))} \nonumber \\
    &= \sum_{\vec{x}_i \in \query{k}} \frac{\partial}{\partial\phi} \metric(\vec{z}_i^{\phi}, \vec{c}_k)  - \sum_{j} \lim_{\alpha \to \infty} \frac{\frac{\partial}{\partial\phi} \metric(\vec{z}_i^{\phi}, \vec{c}_j)}{1 + \sum_{\ell \neq j}  \exp(-\alpha [\metric(\vec{z}_i^{\phi}, \vec{c}_\ell) - \metric(\vec{z}_i^{\phi}, \vec{c}_j)])}. \nonumber
\end{align}
It is obvious that whenever at least one of the exponential terms in the denominator in the expression above has positive rate, corresponding to the case $\exists \ell\neq j:  [\metric(\vec{z}_i^{\phi}, \vec{c}_\ell) - \metric(\vec{z}_i^{\phi}, \vec{c}_j)] < 0$, the ratio converges to zero as  $\alpha \to \infty$ under assumption $\mathcal{A}_2$. The only case when the limit is non-zero is when $\vec{c}_j$ is the prototype closest to the query point $\vec{x}_i$. If we define the index of this prototype as $j_{i}^* = \arg\min_j \metric(\vec{z}_i^{\phi}, \vec{c}_j)$, then the following holds: $\forall \ell\neq j_{i}^*:  [\metric(\vec{z}_i^{\phi}, \vec{c}_\ell) - \metric(\vec{z}_i^{\phi}, \vec{c}_{j_{i}^*})] > 0$, leading (under additional assumption $\mathcal{A}_1$) to:
\begin{align} 
\lim_{\alpha \to \infty} \frac{1}{1 + \sum_{\ell \neq j}  \exp(-\alpha [\metric(\vec{z}_i^{\phi}, \vec{c}_\ell) - \metric(\vec{z}_i^{\phi}, \vec{c}_{j_{i}^*})])} = 1. \nonumber
\end{align}
Therefore,~\eqref{eqn:softmax_with_alpha_by_class_derivative_lim_infty_result} follows. \qed

\section*{Acknowledgements}
Authors acknowledge the support of the Spanish project TIN2015-65464-R (MINECO/FEDER), the 2016FI B 01163 grant of Generalitat de Catalunya. Authors would like to thank Nicolas Chapados, Adam Salvail and Rachel Samson as well as anonymous reviewers for their careful reading of the manuscript and for providing constructive feedback and valuable suggestions.

\bibliography{main}
\bibliographystyle{abbrvnat}

%%%%%%%%%% Merge with supplemental materials %%%%%%%%%%
\newpage
\pagebreak
% \widetext
\begin{center}
\textbf{\large Supplementary Materials. TADAM: Task dependent adaptive metric for improved few-shot learning}
\end{center}
%%%%%%%%%% Merge with supplemental materials %%%%%%%%%%
%%%%%%%%%% Prefix a "S" to all equations, figures, tables and reset the counter %%%%%%%%%%
\setcounter{equation}{0}
\setcounter{figure}{0}
\setcounter{table}{0}
\setcounter{page}{1}
\setcounter{section}{0}
\makeatletter
\gdef\thesection{S\@arabic\c@section}
%%%%%%%%%% Prefix a "S" to all equations, figures, tables and reset the counter %%%%%%%%%%

\section{Architecture details} \label{ssec:components_of_architecture}

\begin{figure}[h]
    \centering
    \begin{subfigure}[t]{0.35\textwidth}
        \centering
        \includegraphics[width=\textwidth]{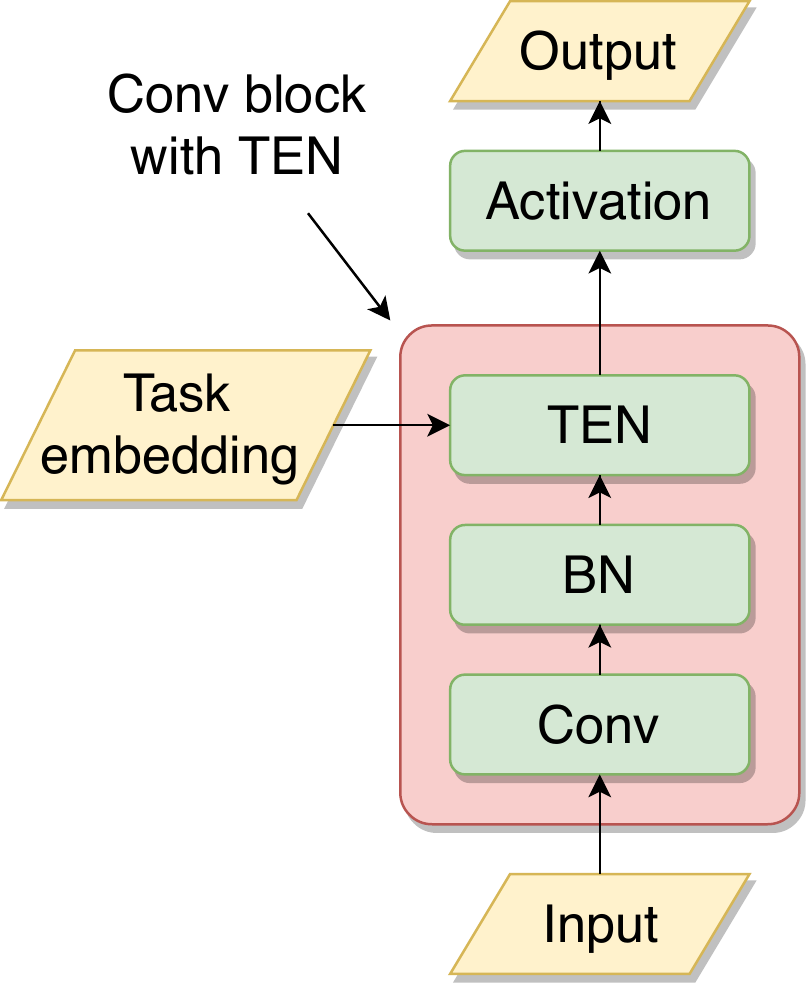}
        \caption{Convolutional block with TEN.}
        \label{fig:conv_block}
    \end{subfigure} \hspace{0.15\textwidth}
    \begin{subfigure}[t]{0.3\textwidth}
        \centering
        \includegraphics[width=\textwidth]{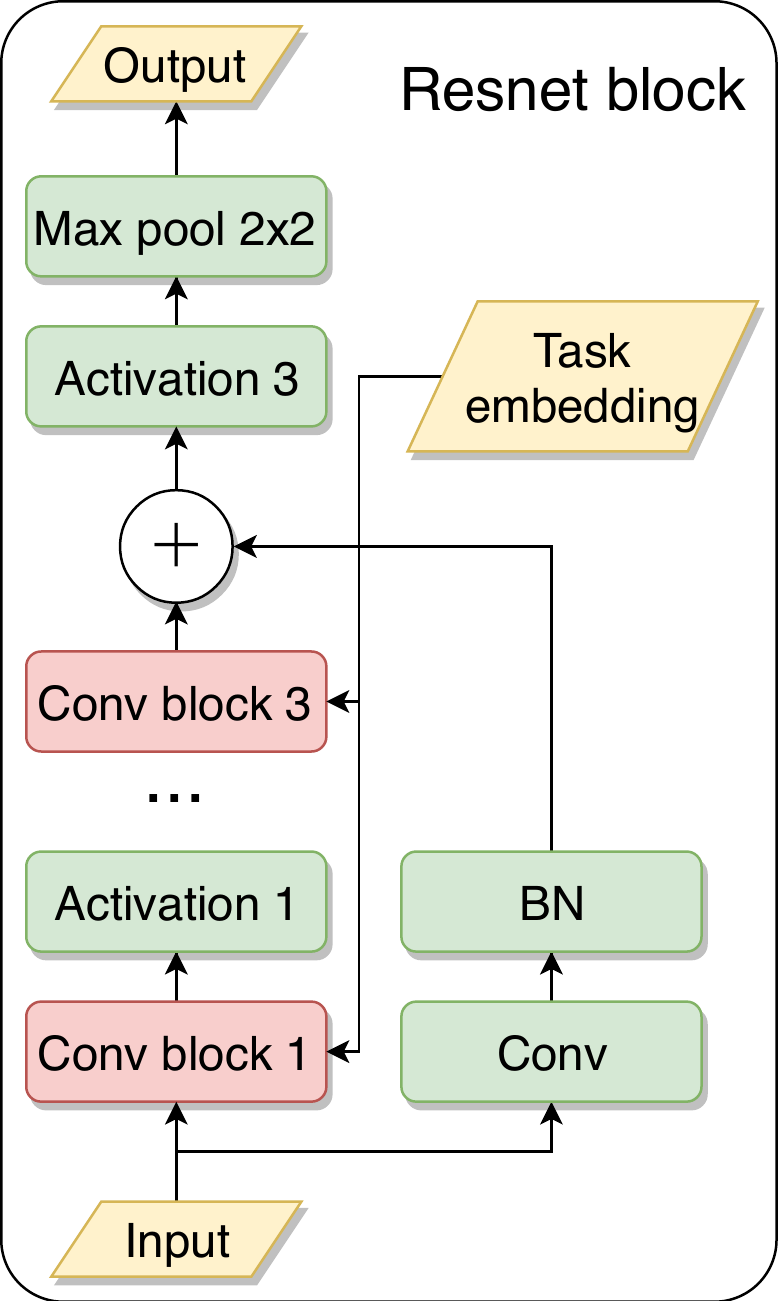}
        \caption{Resnet block with TEN.}
        \label{fig:resnet_block}
    \end{subfigure}
    \caption{Components of the ResNet-12 feature extractor.}
    \label{fig:ten_resnet_details}
\end{figure}

\textbf{ResNet-12 architecture details.} The resnet blocks used in the ResNet-12 feature extractor are shown in Fig.~\ref{fig:ten_resnet_details}. The feature extractor consists of 4 resnet blocks shown in Fig.~\ref{fig:resnet_block} followed by a global average-pool. Each resnet block consists of 3 convolutional blocks shown in Fig.~\ref{fig:conv_block} followed by 2x2 max-pool. Each convolutional layer is followed by a batch norm layer and the swish-1 activation function proposed by~\citet{Ramachandran2017searching}. We found that the fully convolutional architecture performs best as a few-shot feature extractor, both on mini-Imagenet and on FC100. We found that inserting additional projection layers after the ResNet stack was always detrimental to the few-shot performance. We cross-validated this result with multiple hyper-parameter settings for the projection layers (number of layers, layer widths, and dropout). In addition to that, we observed that adding extra convolutional layers and max-pool layers before the ResNet stack was detrimental to the few-shot performance. Therefore, we used fully convolutional, fully residual architecture in all our experiments.

The hyperparameters for the convolutional layers are as follows. The number of filters for the first ResNet block was set to 64 and it was doubled after each max-pool block. The $L_2$ regularizer weight was cross-validated at 0.0005 for each layer.

\textbf{TEN architecture details.} The detailed architecture of the TEN block is depicted in Fig.~\ref{fig:tbn_block}. Our implementation of the TEN uses two separate fully connected residual networks to generate vectors $\bm{\gamma}, \bm{\beta}$. We cross-validated the number of layers to be 3. The first layer projects the task representation into the target width. The target width is equal to the number of filters of the convolutional layer that the TEN block is conditioning (see Fig.~\ref{fig:conv_block}). The remaining layers operate at the target width and each of them has a skip connection. The $L_2$ regularizer weight for $\gamma_0$ and $\beta_0$  was cross-validated at 0.01 for each layer. We found that smaller values led to considerable overfit. In addition to that, we were not able to successfully train TEN without $\gamma_0$ and $\beta_0$, because the training tended to be stuck in local minima where the overall effect of introducing TEN was detrimental to the few-shot performance of the architecture.

\begin{figure}[t]
    \centering
    \includegraphics[width=0.5\textwidth]{}
    \caption{Architecture of the TEN block.}
    \label{fig:tbn_block}
\end{figure}

\section{Few-shot CIFAR100 details} \label{sec:fc100_details}

\textbf{Train split.}~Super-class labels: \{1, 2, 3, 4, 5, 6, 9, 10, 15, 17, 18, 19\}; super-class names: \{fish, flowers, food\_containers, fruit\_and\_vegetables, household\_electrical\_devices, household\_furniture, large\_man-made\_outdoor\_things, large\_natural\_outdoor\_scenes, reptiles, trees, vehicles\_1, vehicles\_2\}. 

\textbf{Validation split.}~Super-class labels: \{8, 11, 13, 16\}; super-class names: \{large\_carnivores, large\_omnivores\_and\_herbivores, non-insect\_invertebrates, small\_mammals\}.

\textbf{Test split.}~Super-class labels: \{0, 7, 12, 14\}; super-class names: \{aquatic\_mammals, insects, medium\_mammals, people\}.

We would like to stress that we still sample all the tasks uniformly at random within train, validation and test subsets. Therefore, each task with very high probability contains samples belonging to classes from several superclasses.

\section{Training procedure details} \label{ssec:training_procedure_details}

\textbf{Episode composition.} The training procedure composes a few-shot training batch from several tasks, where a task is understood to be a fixed selection of 5 classes. We found empirically that for the 5-shot scenario the best number of tasks per batch was 2, for 10-shot it was 1 and for 1-shot it was 5. The \emph{sample} set in each training batch was created using the same number of shots as in the target deployment (test) scenario. The images in the training \emph{query} set were sampled uniformly at random. We observed that the best results were obtained when the number of query images was approximately equal to the total number of sample images in the batch. Thus we used 32 query images per task for 5-shot, 64 for 10-shot and 12 for 1-shot. 

\textbf{The auxiliary classification task} is based on the usual 64-way training (for mini-Imagenet). Co-training uses a fixed batch of 64 image samples sampled uniformly at random from the training set. The learning rate annealing schedule for the auxiliary task is synchronized with that of the main few-shot task.

\textbf{Optimization, scheduling and learning rate.} When training with auxiliary classification task we used total 30000 episodes for training on mini-Imagenet and 10000 episodes for training on FC100. The results obtained with no auxiliary classification co-training used twice as many episodes. To obtain all our results we used SGD with momentum 0.9 and initial learning rate set at 0.1. The learning rate was annealed by a factor of 10 halfway through the training and two more times every 2500 episodes. The reported numbers are calculated using early-stopping based on validation set classification error tracking.

\textbf{Classification accuracy evaluation.} The accuracy is evaluated using 10 random restarts of the optimization procedure and based on 500 randomly generated tasks each having 100 random query samples.

\textbf{Reproducing results in~\citep{snell2017prototypical}.} To reproduce the results reported in~\citep{snell2017prototypical} we used exactly the same setup and network architecture reported in the original paper.

\end{document}